\documentclass{article}
\usepackage{iclr2027_conference,times}

\usepackage{amsmath,amsfonts,bm}

\def\eqref#1{equation~\ref{#1}}

\def\1{\bm{1}}

\DeclareMathAlphabet{\mathsfit}{\encodingdefault}{\sfdefault}{m}{sl}
\SetMathAlphabet{\mathsfit}{bold}{\encodingdefault}{\sfdefault}{bx}{n}

\newcommand{\KL}{D_{\mathrm{KL}}}

\usepackage[T1]{fontenc}
\usepackage[utf8]{inputenc}
\usepackage{graphicx,booktabs,array,tabularx,makecell,multirow}
\usepackage{flafter,placeins,needspace}
\usepackage[table]{xcolor}
\usepackage{amsmath,amssymb,amsthm}
\usepackage{enumitem,microtype,url}
\usepackage{hyperref}

\definecolor{red}{HTML}{CF553D}
\definecolor{oursgroup}{HTML}{E8F0FE}
\hypersetup{
  colorlinks=true,
  linkcolor=red,
  filecolor=magenta,
  urlcolor=cyan,
  citecolor=cyan,
  pdftitle={ReCal: Calibrating Structured Pruning for On-Policy Distillation Recovery},
  pdfauthor={Houcheng Jiang, Mao Zheng, Mingyang Song, Qiyong Zhong, Jie Sun, Tianyu Zhang, Junfeng Fang}
}

\newcolumntype{Y}{>{\centering\arraybackslash}X}
\setlist{nosep,leftmargin=*}
\renewcommand{\paragraph}[1]{\par\textbf{#1}\hspace{1em}}
\newtheorem{proposition}{Proposition}
\renewcommand{\KL}{\mathop{\mathrm{KL}}}
\newcommand{\method}{\textsc{ReCal}}
\DeclareMathOperator{\TV}{TV}

\iclrfinalcopy

\title{\textsc{ReCal}: Calibrating Structured Pruning for\\
On-Policy Distillation Recovery}

\author{%
\textbf{Houcheng Jiang}$^{1,2,3,*}$,
\textbf{Mao Zheng}$^{2,*}$,
\textbf{Mingyang Song}$^{2,*}$,
\textbf{Qiyong Zhong}$^{2}$\\
\textbf{Jie Sun}$^{3}$,
\textbf{Tianyu Zhang}$^{3}$,
\textbf{Junfeng Fang}$^{4,\dagger}$\\[2pt]
\normalfont $^{1}$Zhongguancun Academy\quad
$^{2}$Foundation Model Department, Tencent\\
\normalfont $^{3}$University of Science and Technology of China\\
\normalfont $^{4}$National University of Singapore\\[2pt]
\normalfont $^{*}$\,Equal Contribution\quad
$^{\dagger}$\,Corresponding author
}

\begin{document}
\raggedbottom
\maketitle

\fancyhead{}
\fancyhead[L]{Preprint.}
\fancyfoot{}
\fancyfoot[C]{\thepage}
\renewcommand{\headrulewidth}{0pt}
\renewcommand{\footrulewidth}{0pt}

\begin{abstract}
Structured pruning reduces the deployment cost of reasoning language models, but capability degradation can hinder subsequent on-policy distillation (OPD) recovery. Because OPD relies on student-generated trajectories, damage that persists after offline distillation can limit its effectiveness. We propose \method{}, Recovery-Aware Calibration, a simple plug-and-play approach that improves OPD recovery by adjusting calibration before pruning. \method{} uses forward KL between an unpruned teacher and a pruned probe to identify disrupted teacher-supported predictions, then reweights calibration statistics to guide existing pruning criteria toward preserving them. Across models and pruning methods, \method{} improves mathematical reasoning after OPD in most comparisons, with gains of up to 16.7 percentage points on AIME and improvements in most code-generation comparisons. Further analysis shows reduced residual damage at heavily affected tokens and performance advantages that persist through recovery. These results support calibrating structured pruning for OPD recovery. Our code: \url{https://github.com/jianghoucheng/ReCal}.
\end{abstract}

\section{Introduction}\label{sec:intro}
Large language models (LLMs) increasingly support complex reasoning and autonomous agent tasks, but their growing size and long generations make deployment expensive \citep{anthropic2026fable,openai2026astra,yang2025qwen3}. Structured pruning reduces this cost by removing entire channels, heads, or layers, producing smaller models that run on standard hardware \citep{ma2023llmpruner,an2023flap,ashkboos2024slicegpt}. However, pruning often requires subsequent training to recover lost capabilities \citep{xia2023sheared,muralidharan2024compact,sreenivas2024minitron}. Recent distillation advances support a two-stage recovery pipeline: offline distillation through supervised fine-tuning (SFT) on teacher responses, followed by on-policy distillation (OPD) on student-generated trajectories \citep{kim2016sequence,agarwal2023opd,zhang2026shortopd}. The unpruned model serves as teacher and the pruned model as student.

\begin{figure}[t]
\centering
\includegraphics[width=\linewidth]{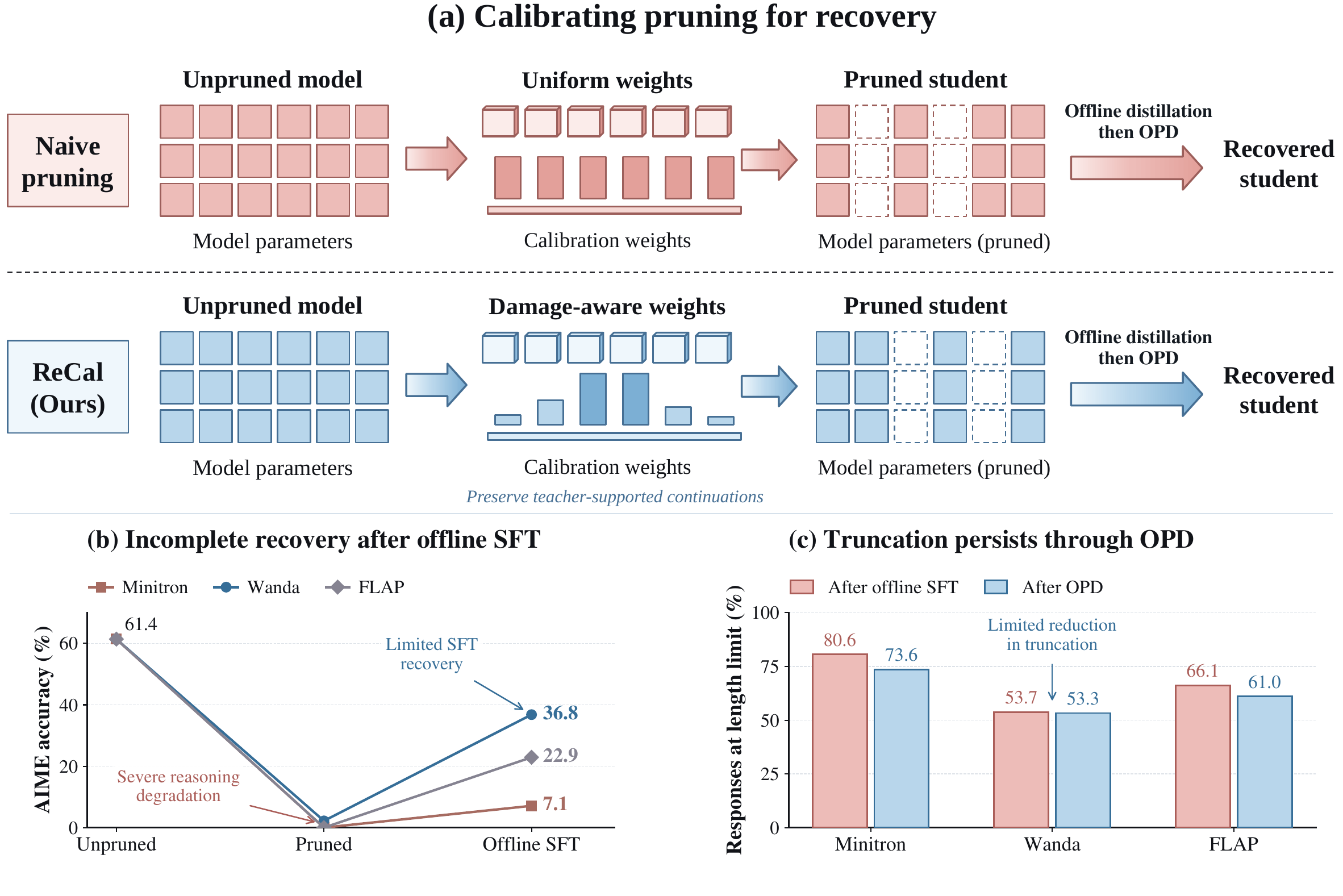}
\caption{\textbf{Pruning and the limitations of subsequent recovery.} (a) \method{} adjusts calibration before SFT and OPD. (b) AIME accuracy before pruning, after pruning, and after offline SFT. (c) The fraction of responses reaching the length limit after SFT and OPD. Both data panels show measured 8B baselines; Appendix~\ref{app:intro_design} gives the protocol.}
\label{fig:intro}
\end{figure}

Despite this progress, pruning can severely degrade reasoning, and offline SFT only partly restores accuracy (Figure~\ref{fig:intro}b). Pruned models can produce repetitive, unfinished, or incorrect responses \citep{wang2025think,zhang2026shortopd}. Worse still, OPD obtains teacher supervision at these student-generated prefixes, where poor generations can compromise the feedback \citep{xu2025speculative}. A teacher can agree with the student's next-token predictions at a flawed prefix even when the reasoning fails; teacher-confirmed repetition is one documented example \citep{zhang2026shortopd}. Low reverse KL on student rollouts therefore need not indicate successful reasoning. In our baseline evaluations, many responses still reach the length limit after OPD (Figure~\ref{fig:intro}c), leaving substantial generation failures unresolved.

Since recovery depends on student trajectories, pruning should account for damage that may hinder recovery, complementing training-aware structure selection \citep{tang2026darwinlm,ai2025nirvana}. We examine this damage on teacher-generated prefixes, where the teacher and a pruned model receive the same context. Forward KL emphasizes predictions that the teacher supports but the pruned model assigns little probability. Our analysis shows that tokens most affected by pruning remain a major source of discrepancy after recovery. Giving these tokens greater calibration weight could therefore protect vulnerable predictions before recovery begins.

We therefore propose \method{}, a simple \emph{Recovery-Aware Calibration} method for structured pruning. As shown in Figure~\ref{fig:intro}(a), \method{} first uses a pruned probe to measure token-level forward KL on teacher trajectories. These scores reweight calibration statistics used to recompute channel importance on the unpruned model. Applying the new mask produces a student for the same SFT and OPD recovery pipeline. \method{} is plug-and-play: it preserves the base pruning criterion and target widths while changing the evidence used to select channels.

To evaluate \method{}, we compare its recovery performance with representative structured pruning baselines on Qwen3-8B and Qwen3-4B-Instruct. After on-policy distillation, \method{} achieves stronger mathematical reasoning than the best-performing baseline on both models. It also improves mathematical reasoning in seven of eight model--criterion comparisons, with gains of up to 16.7 percentage points, and benefits code generation in most comparisons. These results show that calibrating which structures survive pruning can improve the capabilities recovered by the subsequent distillation pipeline.

\section{Preliminaries}\label{sec:prelim}
We first introduce calibration-based structured pruning in Section~\ref{sec:pruning}, then describe the on-policy distillation recovery process that follows it in Section~\ref{sec:opd}.

\subsection{Structured Pruning}\label{sec:pruning}
Structured pruning removes parameter groups to obtain a smaller model \citep{ma2023llmpruner,an2023flap,muralidharan2024compact}. We consider feed-forward network (FFN) width pruning, where each selected channel is removed together with its associated input and output weights. Let $p_T$ denote the original model and $m^\ell\in\{0,1\}^{d_\ell}$ the channel mask in layer $\ell$. A value of one retains a channel. Applying these masks to the original weights gives the pruned model $p_m$.

Calibration-based criteria use a small set of inputs to estimate channel importance. For an additive criterion, let $c_{ij}^\ell$ be the contribution of calibration token $i$ to channel $j$ in layer $\ell$. Aggregating over $N$ tokens and retaining the $k_\ell$ highest-scoring channels gives
\begin{equation}\label{eq:base}
s_j^\ell=\frac1N\sum_{i=1}^N c_{ij}^\ell,
\qquad m_j^\ell=\mathbb{1}\bigl[j\in\operatorname{Top}_{k_\ell}(s^\ell)\bigr].
\end{equation}
The contribution can use activation magnitude, norms, fluctuations, or gradients \citep{sun2023wanda,an2023flap,ma2023llmpruner}. Equation~\ref{eq:base} covers additive scores; see Appendix~\ref{app:adapters} for other criteria.

We reweight token evidence while preserving the criterion and target widths.

\subsection{On-Policy Distillation Recovery}\label{sec:opd}
Offline sequence distillation first initializes recovery by fitting fixed responses from the unpruned teacher \citep{kim2016sequence}. We implement this stage as supervised fine-tuning (SFT) on the teacher responses; we use SFT to denote this offline distillation stage throughout the paper. On-policy distillation (OPD) then trains the student on responses sampled from its own policy, with a frozen teacher providing the supervision \citep{agarwal2023opd,li2026rethinking}. Let $p_\theta$ be the trainable pruned student. For prompt $x$, a rollout checkpoint $p_{\bar\theta}$ generates response $y$, and both teacher and student evaluate the prefixes $h_t=(x,y_{<t})$. A full-vocabulary reverse-KL matching objective on these rollouts is
\begin{equation}\label{eq:opd}
\mathcal L_{\mathrm{OPD}}(\theta;\bar\theta)=
\mathbb E_{x,\,y\sim p_{\bar\theta}(\cdot\mid x)}
\left[\frac1{|y|}\sum_t\KL\bigl(p_\theta(\cdot\mid h_t)\Vert p_T(\cdot\mid h_t)\bigr)\right],
\end{equation}
where $\KL(p\Vert q)=\sum_v p(v)\log[p(v)/q(v)]$. Fresh responses are collected as the student updates, so the teacher supervises prefixes reached by the student's current policy.

The retained structure determines the starting student for this process. We hold recovery fixed and examine whether pruning damage can inform structure selection. Appendix~\ref{app:protocol} specifies the practical OPD signal and training settings.

\section{Understanding Pruning Damage and Recovery}\label{sec:analysis}
\begin{figure}[t]
\centering
\includegraphics[width=\linewidth]{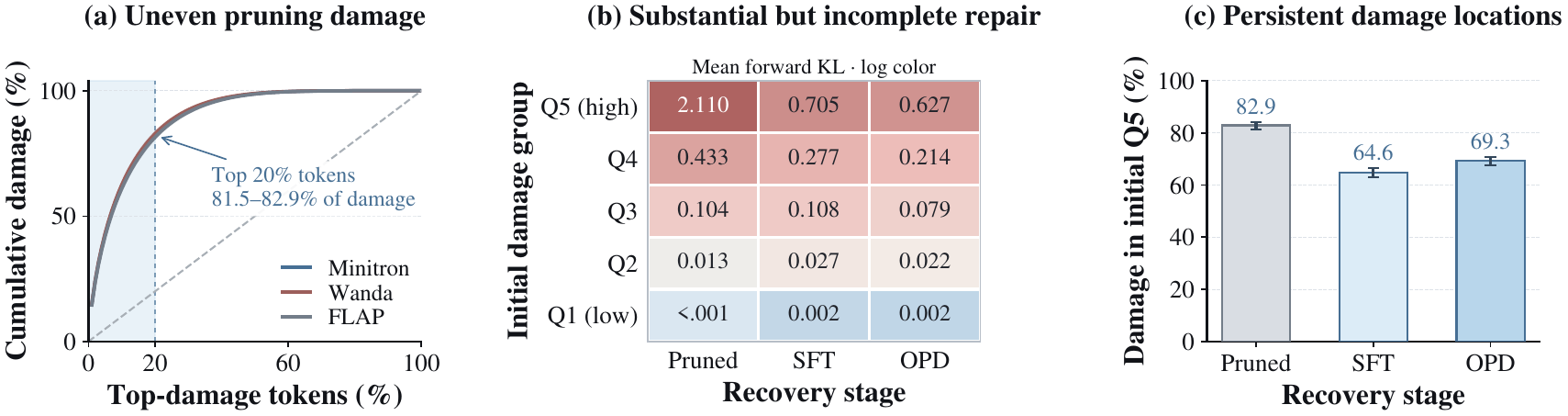}
\caption{\textbf{Locating damage before recovery.} (a) Within-trajectory damage concentration across ordinary pruning criteria. (b) Mean KL through baseline recovery in groups fixed by initial damage; color is logarithmic. (c) The fraction of each response's damage carried by its initially highest-damage group (Q5), averaged across responses. Q1--Q5 are equal-count groups ordered by initial damage. Token groups are never reselected after recovery. SFT denotes offline distillation through supervised fine-tuning. The pale vertical band marks Q5; curve bands and error bars denote 95\% trajectory-bootstrap intervals.}
\label{fig:analysis}
\end{figure}

We analyze several structure pruning methods on held-out teacher responses to locate pruning damage and track its recovery. At token $t$ in response $r$, both models receive the same prefix $h_{r,t}$. Token-level damage is
\begin{equation}\label{eq:damage}
d_{r,t}(m)=\KL\bigl(p_T(\cdot\mid h_{r,t})\Vert p_m(\cdot\mid h_{r,t})\bigr).
\end{equation}
This measures the change in next-token predictions, weighted by the teacher distribution. Within each response, we rank tokens by raw-pruned baseline KL and divide them into five equal-count groups, Q1--Q5, from lowest to highest damage. Each group contains one fifth of the response's tokens. We keep memberships and teacher prefixes fixed through offline SFT and OPD, averaging KL within each response/group and then across responses. Concentration is examined in both models; recovery tracking uses Wanda. Full settings are in Appendix~\ref{app:diagnostics}.

\noindent\textbf{Pruning distorts some token predictions much more than others.}\label{sec:damage_concentration}
Figure~\ref{fig:analysis}(a) shows that the highest-damage 20\% of tokens contribute 81.5--82.9\% of within-response damage across the three criteria. The second model exhibits the same pattern, with shares of 80.9--83.1\% (Appendix~\ref{app:diagnostics}). Channel removal therefore affects token predictions unevenly, which uniform aggregation does not explicitly distinguish. We next ask whether these predictions remain damaged after recovery.

\noindent\textbf{Recovery reduces token-level damage but leaves substantial residual errors.}\label{sec:damage_recovery}
Using the fixed groups defined above, Figure~\ref{fig:analysis}(b) shows that mean KL in the highest-damage group decreases from 2.110 after pruning to 0.705 after SFT and 0.627 after OPD. The full pipeline removes 70.3\% of this group's initial discrepancy, with most of that reduction occurring during SFT. Nevertheless, its post-OPD KL remains nearly three times that of the next-highest group (0.214). Substantial repair therefore leaves these tokens disproportionately damaged.

\noindent\textbf{Tokens most affected by pruning still account for most damage after recovery.}
To quantify how much of the remaining damage can be located before recovery, we measure the share carried by the original highest-damage group. Without reselecting any tokens, this group accounts for 64.6\% of within-response damage after SFT and 69.3\% after OPD (Figure~\ref{fig:analysis}c). The increase in share from SFT to OPD does not mean its damage grows: its absolute KL decreases, while its share of the remaining discrepancy rises. Initial damage thus identifies tokens that remain poorly matched after recovery, using a signal already available before training.

These observations motivate accounting for teacher-prefix damage during structure selection. A base-pruned model can identify affected tokens; emphasizing their calibration evidence may reduce the discrepancy passed to recovery. Section~\ref{sec:method} defines this intervention, and Section~\ref{sec:experiments} tests whether it improves recovered accuracy.

\section{Recovery-Aware Calibration for Pruning}\label{sec:method}
\method{} adjusts calibration to address the persistent damage identified in Section~\ref{sec:analysis}. We first locate token-level damage, use it to calibrate structure selection, and then recover the selected student through SFT and OPD. Figure~\ref{fig:method} summarizes this procedure.

\begin{figure}[t]
\centering
\includegraphics[width=\linewidth]{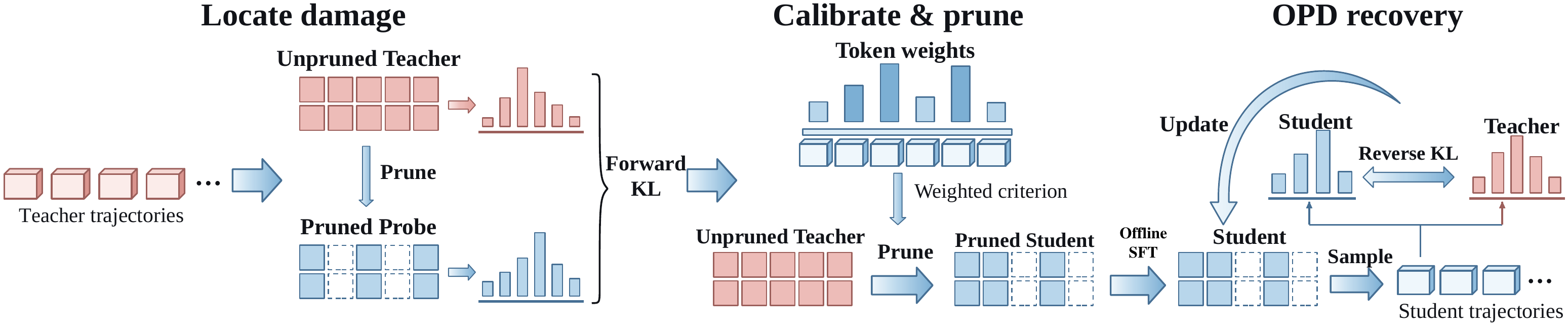}
\caption{\textbf{Overview of \method{}.} \emph{Locate damage:} compare the teacher and a pruned probe on teacher trajectories. \emph{Calibrate \& prune:} use token-level damage to reweight the base criterion and prune the original model. \emph{OPD recovery:} recover the selected student with the unchanged training procedure.}
\label{fig:method}
\end{figure}

\paragraph{Locate damage.}
We apply the base pruning criterion at the target widths to obtain a probe $p_{m_0}$. The teacher and probe evaluate identical prefixes from teacher-generated responses. We estimate the forward-KL damage in Equation~\ref{eq:damage} using teacher-selected tokens and a residual-probability bucket (Appendix~\ref{app:adapters}), and denote the resulting scores by $d_{r,t}$. Teacher trajectories expose damage independently of the probe's own rollouts. The probe does not restrict the final mask.

\paragraph{Calibrate \& prune.}
We convert the probe's damage into calibration weights. For response $r$ with $T_r$ valid tokens, let $D_r=\sum_t d_{r,t}$. The token weights and resulting additive channel scores are
\begin{equation}\label{eq:weights}
\widetilde w_{r,t}=\frac{d_{r,t}}{D_r},\qquad
\sum_{t=1}^{T_r}\widetilde w_{r,t}=1.
\end{equation}
\begin{equation}\label{eq:recal}
s_j^\ell=\frac1R\sum_{r=1}^R\sum_{t=1}^{T_r}\widetilde w_{r,t}c_{r,t,j}^\ell,
\qquad(m^\star)_j^\ell=\mathbb{1}\bigl[j\in\operatorname{Top}_{k_\ell}(s^\ell)\bigr].
\end{equation}
Here $R$ is the number of calibration responses and $c_{r,t,j}^\ell$ is the base contribution defined in Section~\ref{sec:pruning}. Each response has equal total weight, with greater weight assigned to its more damaged tokens. For negligible total damage, we use uniform weights (Appendix~\ref{app:adapters}).

The weighted statistics are computed on the original unpruned model, and the selected mask is applied to its original weights. Thus, a channel removed by the probe remains eligible for the final student. Non-additive criteria incorporate the weights into their activation statistics or token losses before aggregation (Appendix~\ref{app:adapters}). The base criterion and retained widths remain unchanged. Appendix~\ref{app:theory} connects teacher-prefix forward KL to sequence preservation and states its limits.

\paragraph{OPD recovery.}
The selected student first undergoes offline distillation through SFT on fixed teacher responses and then OPD with the same unpruned teacher. Both stages use the same training procedure as the baseline. \method{} changes the structure supplied to recovery, while OPD continues to learn from student-generated trajectories as in Section~\ref{sec:opd}. Appendix~\ref{app:protocol} gives training details.

\section{Experiments}\label{sec:experiments}
\begin{table}[!t]
\centering\fontsize{10}{11.3}\selectfont
\setlength{\tabcolsep}{2.2pt}\renewcommand{\arraystretch}{1.0}
\caption{\textbf{Recovery at approximately 25\% parameter reduction.} \method{} denotes Wanda + \method{}. Pruned: no recovery; SFT: offline distillation; OPD: SFT followed by OPD. Scores are percentages; Avg. averages all six benchmarks. Each column's best/second-best pruned scores within a model are \textbf{bold}/\underline{underlined}, excluding unpruned references.}
\label{tab:main}
\begin{tabularx}{\linewidth}{@{}ll*{7}{Y}@{}}
\toprule
\multirow{2}{*}{Method} & \multirow{2}{*}{Stage} & \multicolumn{3}{c}{Math} & Science & \multicolumn{1}{c}{Instruction} & Code & \multirow{2}{*}{Avg.}\\
\cmidrule(lr){3-5}\cmidrule(lr){6-6}\cmidrule(lr){7-7}\cmidrule(lr){8-8}
& & AIME24 & AIME25 & AIME26 & GPQA & IFB-S & LCB-v5 & \\

\midrule
\multicolumn{9}{c}{\textbf{Qwen3-8B}}\\
\midrule
Unpruned & --- & 66.2 & 55.8 & 62.1 & 54.5 & 23.0 & 46.4 & 51.3\\
\midrule
Minitron & Pruned & 0.0 & 0.0 & 0.0 & 1.0 & 16.9 & 0.0 & 3.0\\
 & SFT & 9.2 & 3.3 & 8.7 & 30.3 & 18.0 & 15.7 & 14.2\\
 & OPD & 16.7 & 10.0 & 15.4 & 35.9 & 21.2 & 19.9 & 19.9\\
\addlinespace[1.5pt]
FLAP & Pruned & 0.0 & 0.0 & 0.0 & 23.2 & 16.3 & 0.0 & 6.6\\
 & SFT & 27.5 & 20.8 & 20.4 & 33.3 & 19.2 & 15.1 & 22.7\\
 & OPD & 31.7 & 25.4 & 30.0 & \underline{43.4} & 20.1 & 24.7 & 29.2\\
\addlinespace[1.5pt]
LLM-Pruner & Pruned & 0.4 & 0.0 & 0.0 & 22.2 & 16.0 & 0.0 & 6.4\\
 & SFT & 11.7 & 6.2 & 5.8 & 35.9 & 19.5 & 12.7 & 15.3\\
 & OPD & 18.8 & 7.9 & 10.4 & 34.8 & \textbf{22.4} & 17.5 & 18.6\\
\addlinespace[1.5pt]
Wanda & Pruned & 2.5 & 2.1 & 2.1 & 26.8 & 20.6 & 4.2 & 9.7\\
 & SFT & 41.7 & 34.6 & 34.2 & \textbf{45.5} & \underline{21.5} & 25.3 & 33.8\\
 & OPD & 46.7 & 36.7 & 41.7 & 42.4 & 19.5 & 25.9 & 35.5\\
\addlinespace[1.5pt]
\rowcolor{oursgroup}\method{} & Pruned & 33.3 & 23.7 & 20.0 & 35.4 & 20.1 & 16.9 & 24.9\\
\rowcolor{oursgroup} & SFT & \underline{51.7} & \underline{39.6} & \underline{46.3} & \underline{43.4} & \underline{21.5} & \underline{26.5} & \underline{38.2}\\
\rowcolor{oursgroup} & OPD & \textbf{52.9} & \textbf{42.9} & \textbf{49.6} & 40.4 & 18.3 & \textbf{28.3} & \textbf{38.7}\\
\midrule
\multicolumn{9}{c}{\textbf{Qwen3-4B-Instruct}}\\
\midrule
Unpruned & --- & 63.3 & 45.8 & 54.2 & 60.6 & 34.3 & 31.9 & 48.4\\
\midrule
Minitron & Pruned & 0.0 & 0.4 & 0.0 & 30.3 & 26.7 & 1.2 & 9.8\\
 & SFT & 6.2 & 0.8 & 5.4 & 42.9 & 28.2 & 15.7 & 16.5\\
 & OPD & 6.2 & 2.1 & 6.2 & 35.4 & \underline{30.2} & 21.7 & 17.0\\
\addlinespace[1.5pt]
FLAP & Pruned & 4.2 & 0.8 & 0.8 & 31.8 & 27.9 & 4.2 & 11.6\\
 & SFT & 21.7 & 11.7 & 13.8 & 36.9 & 26.2 & 21.7 & 22.0\\
 & OPD & 27.9 & 17.5 & 19.2 & 38.9 & \textbf{30.8} & 22.3 & 26.1\\
\addlinespace[1.5pt]
LLM-Pruner & Pruned & 0.0 & 0.0 & 0.0 & 26.8 & 25.0 & 0.6 & 8.7\\
 & SFT & 6.7 & 2.5 & 6.7 & 31.8 & 25.9 & 15.1 & 14.8\\
 & OPD & 8.7 & 0.8 & 8.7 & 35.9 & 27.9 & 15.1 & 16.2\\
\addlinespace[1.5pt]
Wanda & Pruned & 13.3 & 5.4 & 6.7 & 40.4 & 29.7 & 5.4 & 16.8\\
 & SFT & 29.6 & 23.7 & 20.4 & \textbf{47.0} & 27.3 & 20.5 & 28.1\\
 & OPD & 35.8 & \underline{32.5} & 28.7 & \underline{44.4} & 29.7 & \underline{24.7} & 32.6\\
\addlinespace[1.5pt]
\rowcolor{oursgroup}\method{} & Pruned & 20.4 & 10.4 & 10.8 & 37.9 & \underline{30.2} & 15.1 & 20.8\\
\rowcolor{oursgroup} & SFT & \underline{41.7} & 31.7 & \underline{29.2} & 43.9 & 27.3 & 22.9 & \underline{32.8}\\
\rowcolor{oursgroup} & OPD & \textbf{42.5} & \textbf{34.2} & \textbf{35.0} & \textbf{47.0} & 28.8 & \textbf{25.3} & \textbf{35.5}\\
\bottomrule
\end{tabularx}
\end{table}

After introducing the setup in Section~\ref{sec:setup}, we address four questions in Sections~\ref{sec:main_results}--\ref{sec:ablation}:
\begin{itemize}[itemsep=2pt]
\item \textbf{RQ1:} How does \method{} compare with existing pruning methods before and after recovery? Does it improve the performance reached after OPD?
\item \textbf{RQ2:} Can \method{} improve different pruning criteria without changing their scoring rules? Do its benefits extend beyond the default Wanda configuration?
\item \textbf{RQ3:} How does the benefit of \method{} change as more parameters are removed? Does it persist across compression ratios, models, and base criteria?
\item \textbf{RQ4:} How does \method{} change the selected channels and the damage identified in Section~\ref{sec:analysis}? How do these changes relate to recovery?
\end{itemize}

\subsection{Experimental Setup}\label{sec:setup}
Appendix~\ref{app:protocol} details the pruning adapters, data sources, training, and evaluation.

\paragraph{Models and baselines.}
We use Qwen3-8B in thinking mode and Qwen3-4B-Instruct-2507 without thinking \citep{yang2025qwen3}. We compare structured FFN-channel adapters of Minitron, Wanda, FLAP, and LLM-Pruner \citep{muralidharan2024compact,sun2023wanda,an2023flap,ma2023llmpruner}. All methods retain the same widths at a given compression ratio. The main comparison removes approximately 25\% of total parameters; the ratio sweep also includes 15\% and 35\%.

\paragraph{Recovery and evaluation.}
Every student undergoes offline distillation through SFT on responses from its unpruned teacher, followed by OPD on student-generated trajectories. Paired methods share recovery data and training settings. We evaluate mathematics on AIME 2024--2026, science on GPQA-Diamond \citep{rein2023gpqa}, instruction following on IFBench \citep{pyatkin2025ifbench}, and code generation on LiveCodeBench-v5 \citep{jain2024livecodebench}. We report AIME accuracy averaged over eight samples per question and the three years, GPQA accuracy, IFBench instruction-level strict accuracy, and LiveCodeBench pass@1. Supplementary experiments and additional results appear in Appendix~\ref{app:supplementary}.

\begin{figure}[t]
\centering\includegraphics[width=\linewidth]{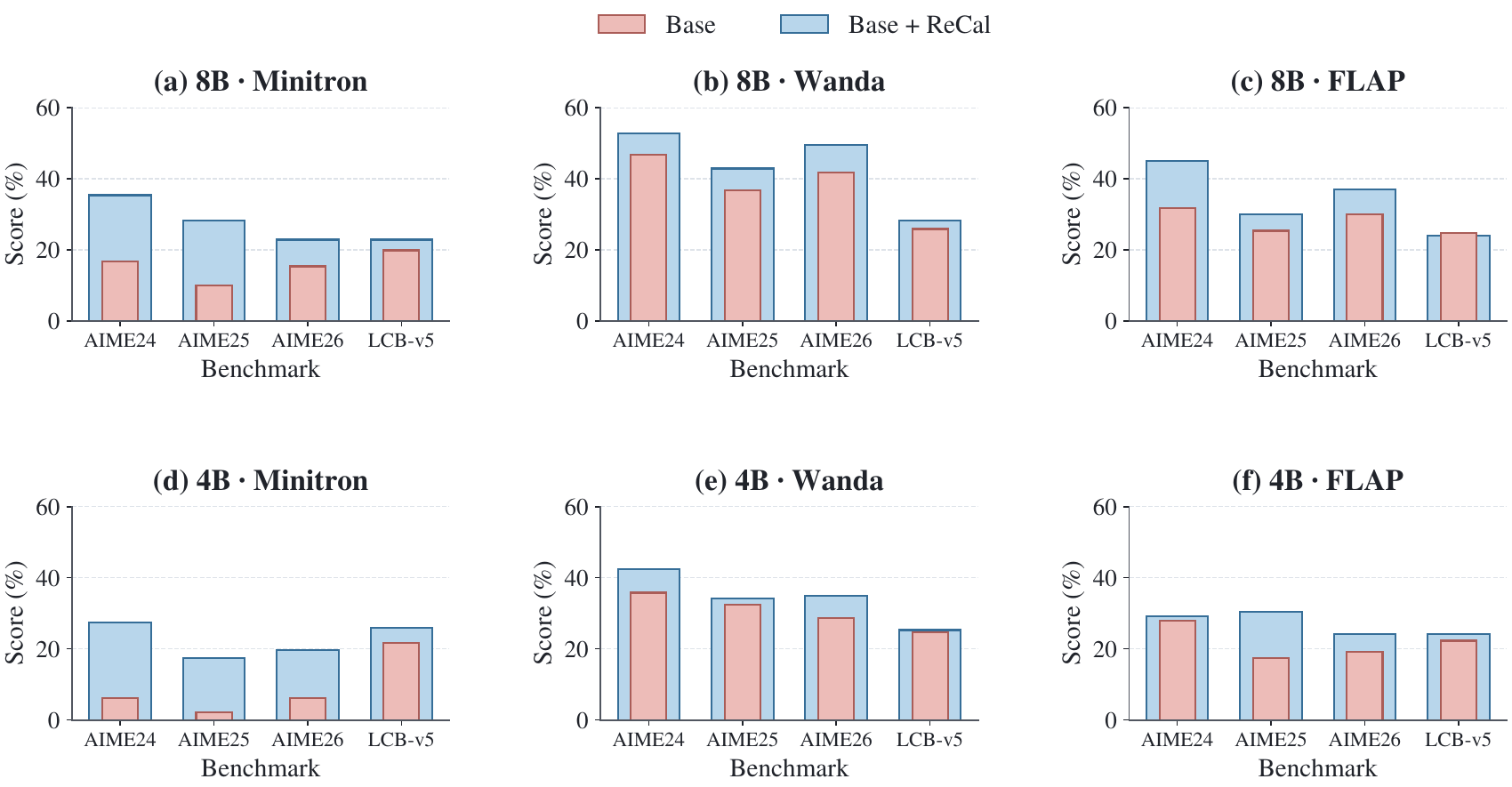}
\caption{\textbf{Post-OPD performance across pruning criteria.} Measured results on AIME24--26 and LiveCodeBench-v5 (LCB-v5). Wide blue bars show Base + \method{}; narrow red bars show Base. Rows identify models and columns identify criteria. See Appendix~\ref{app:llmpruner} for LLM-Pruner.}
\label{fig:compatibility}
\end{figure}
\subsection{Recovery Performance (RQ1)}\label{sec:main_results}
We use Wanda as the base criterion for the default \method{} configuration. Table~\ref{tab:main} compares the three stages: Pruned, SFT, and SFT followed by OPD. We have the following observations:

\begin{itemize}[itemsep=3pt,topsep=2pt]
\item \textbf{Obs. 1: ReCal leads in mathematics after recovery.}
After OPD, \method{} reaches 48.5\% AIME accuracy in 8B and 37.2\% in 4B, exceeding the strongest baseline, Wanda, at 41.7\% and 32.4\%. It leads on each AIME year in both models. After SFT, it already reaches 45.8\% and 34.2\%, respectively, establishing an advantage that persists through OPD.

\item \textbf{Obs. 2: Calibration preserves reasoning capability before recovery.}
Immediately after pruning, the 8B default retains 25.7\% AIME accuracy, compared with 2.2\% for Wanda. The 4B scores are 13.9\% and 8.5\%. The advantage is already present before any recovery training, consistent with calibration preserving more of the teacher's reasoning capability in the selected structure.
\end{itemize}

\subsection{Plug-and-Play Improvement (RQ2)}\label{sec:compatibility}
We next apply \method{} to each pruning criterion while keeping its channel-scoring rule, target widths, and recovery procedure fixed. Figure~\ref{fig:compatibility} compares each fully recovered baseline with its calibrated counterpart. Baselines use prompt calibration, whereas \method{} uses weighted teacher trajectories; the comparison evaluates this combined intervention. We can draw the following observations:

\begin{itemize}[itemsep=3pt,topsep=2pt]
\item \textbf{Obs. 3: ReCal improves mathematics in seven of eight pairs.}
All six activation-based pairs improve on AIME (Figure~\ref{fig:compatibility}), with gains up to 16.7 points. LLM-Pruner improves from 6.1\% to 12.8\% in 4B but declines from 12.4\% to 9.2\% in 8B (Appendix~\ref{app:llmpruner}). The benefit therefore extends across criteria, with a model-dependent exception.

\item \textbf{Obs. 4: Code improves broadly, with trade-offs on other tasks.}
LiveCodeBench improves in seven of eight pairs and GPQA in five, while IFBench declines in all eight (Figure~\ref{fig:recovery_evidence}c). For 8B LLM-Pruner, science and code gain 10.1 and 10.8 points despite the mathematics loss. These trade-offs limit claims of uniform capability preservation (Appendices~\ref{app:llmpruner} and~\ref{app:additional_results}).
\end{itemize}

\subsection{Sensitivity to Compression Ratio (RQ3)}\label{sec:ratios}
\begin{figure}[t]
\centering\includegraphics[width=\linewidth]{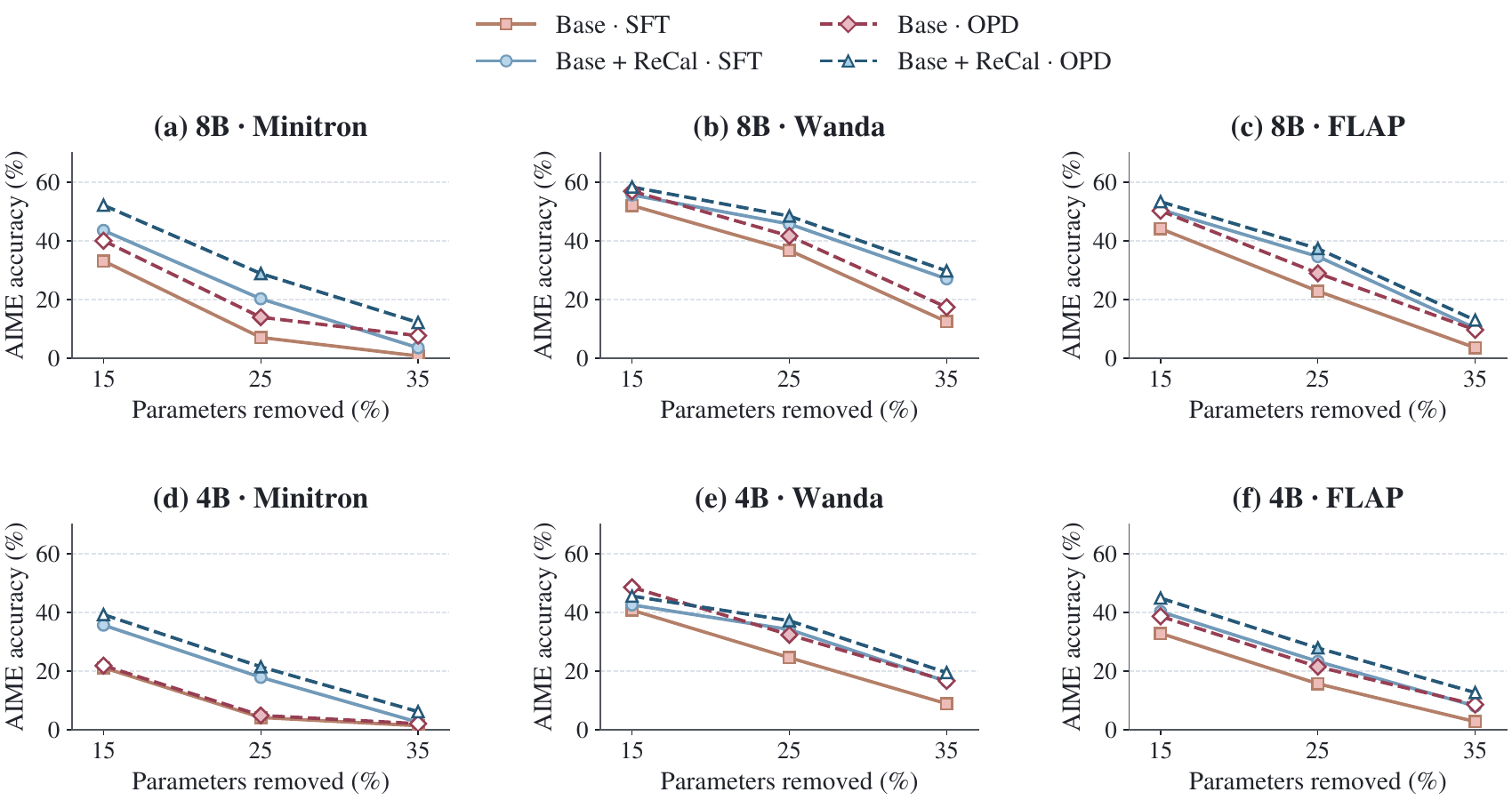}
\caption{\textbf{Base versus Base + \method{} across compression ratios.} Solid: SFT; dashed: OPD. Ratios denote whole-model parameter reduction.}
\label{fig:ratios}
\end{figure}
We vary whole-model parameter reduction from 15\% to 35\% for Minitron, Wanda, and FLAP in both models. Figure~\ref{fig:ratios} shows both recovery stages. We can draw the following observations:

\begin{itemize}[itemsep=3pt,topsep=2pt]
\item \textbf{Obs. 5: ReCal improves SFT recovery at every measured ratio.}
\method{} exceeds Base in all 18 model--criterion--ratio comparisons. Wanda gains 3.6, 9.0, and 14.6 points in 8B at 15\%, 25\%, and 35\% reduction; 4B gains are 1.8, 9.6, and 7.6 points. Benefits persist across ratios but need not increase monotonically.

\item \textbf{Obs. 6: More aggressive pruning still limits recovery.}
All measured SFT curves decline with compression. At 35\% reduction, 8B Wanda + \method{} retains 27.1\% accuracy versus 12.5\% for Base, while calibrated Minitron reaches only 3.6\%. In 4B, calibrated Wanda also outperforms calibrated Minitron and FLAP at this ratio. Both the criterion and retained capacity still matter.
\end{itemize}

\subsection{Structure and Recovery Analysis (RQ4)}\label{sec:ablation}
\begin{figure}[t]
\centering\includegraphics[width=\linewidth]{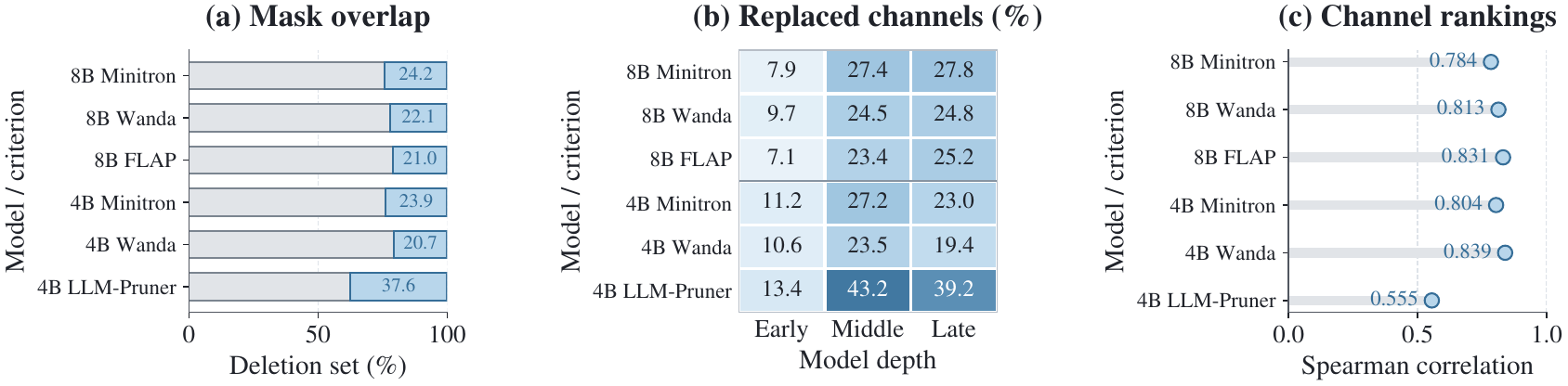}
\caption{\textbf{Changes in channel selection.} (a) Shared (gray) and replaced (blue) baseline deletions. (b) Replacement by layer depth. (c) Spearman correlation of complete channel rankings.}
\label{fig:structure}
\vspace{7pt}
\centering\includegraphics[width=\linewidth]{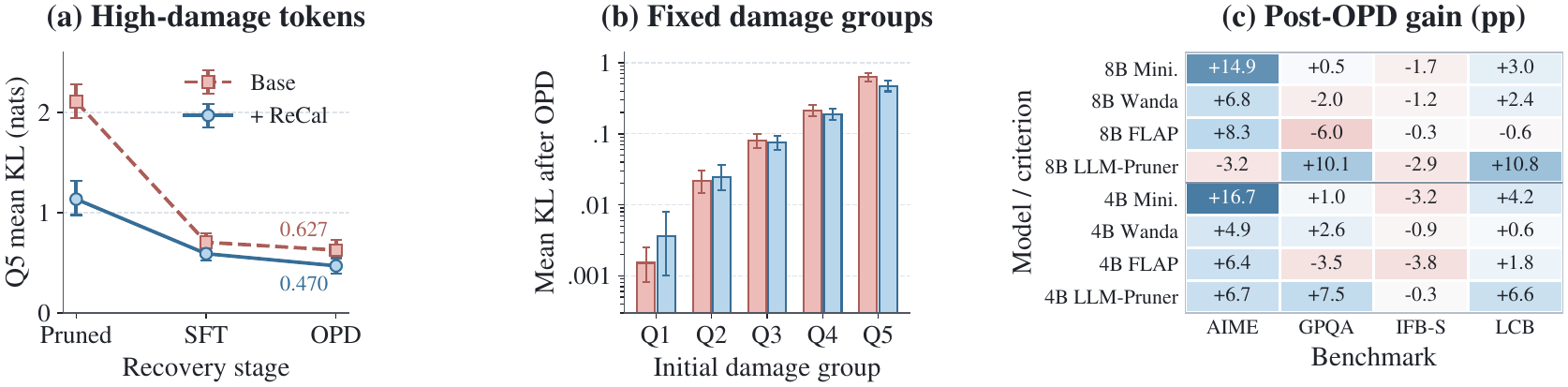}
\caption{\textbf{Damage reduction and recovery.} (a) Q5 KL through recovery. (b) Post-OPD KL by fixed group (log scale). Both use 8B Wanda and Section~\ref{sec:analysis} teacher prefixes; error bars are 95\% trajectory-bootstrap intervals. (c) Post-OPD gains across eight pairs and four tasks: blue indicates gains and red indicates losses. AIME is the three-year mean.}
\label{fig:recovery_evidence}
\end{figure}
We compare masks at identical widths (Figure~\ref{fig:structure}) and revisit the teacher prefixes and baseline-defined Q1--Q5 groups from Section~\ref{sec:analysis}, now adding the calibrated checkpoints (Figure~\ref{fig:recovery_evidence}). We have the following observations:

\begin{itemize}[itemsep=3pt,topsep=2pt]
\item \textbf{Obs. 7: ReCal preserves most selections, with larger changes in deeper layers.}
Activation-based pairs retain 75.8--79.3\% of baseline deletions, with ranking correlations of 0.784--0.839. Replacement is greater in middle and late layers; 4B LLM-Pruner replaces 37.6\% of its deletion set. With widths fixed, \method{} changes which channels survive while retaining most base selections.

\item \textbf{Obs. 8: ReCal reduces the residual damage identified before recovery.}\label{sec:damage_calibration}
On the fixed Q5 tokens, \method{} lowers KL at every stage, ending at 0.470 versus 0.627 for Base, a 25.1\% reduction. The paired difference is $-0.157$ nats (95\% interval: $[-0.177,-0.140]$). Q3--Q5 improve after OPD, while Q1/Q2 intervals include zero. The reduction thus reaches the vulnerable tokens identified in Section~\ref{sec:analysis}, without improving every group.

\item \textbf{Obs. 9: The average SFT advantage persists after OPD.}\label{sec:recovery}
Across all eight pairs, the mean AIME advantage is 7.9 points after SFT and 7.7 after OPD. The signed advantage increases in four pairs and decreases in four. In 8B LLM-Pruner, OPD narrows the SFT deficit without reversing it. Final gains therefore do not imply uniformly larger OPD increments. Appendix~\ref{app:extra_diagnostics} adds completion and alignment diagnostics.
\end{itemize}

\section{Related Work}\label{sec:related}
\paragraph{Structured pruning and capability recovery.}
Structured pruning selects channels or heads using gradient, activation, or reconstruction criteria \citep{ma2023llmpruner,an2023flap,ouden2024surgeon,meng2024osscar,ling2024slimgpt,guo2025slimllm}. Continued training and distillation recover the resulting models \citep{xia2023sheared,muralidharan2024compact,sreenivas2024minitron}. DarwinLM evaluates candidate structures after brief training, while NIRVANA models fine-tuning dynamics and selects calibration data with KL \citep{tang2026darwinlm,ai2025nirvana}. Reasoning-oriented approaches use generated traces, preserve sampling diversity, or improve student initialization \citep{wang2025think,nguyen2025sprint,monjur2026revisiting,he2026red}. \method{} measures teacher--probe damage at individual tokens and reweights their contribution within each teacher trajectory. It preserves the base criterion and widths, changing the calibration evidence used to select a student for SFT and OPD. Appendix~\ref{app:extended_related} compares these mechanisms and broader pruning approaches.

\paragraph{Offline and on-policy distillation.}
Offline distillation fits teacher responses \citep{kim2016sequence}; OPD supervises student-generated prefixes to address training--generation mismatch \citep{agarwal2023opd,gu2023minillm}. DistiLLM uses skew divergences and adaptive reuse, DistiLLM-2 distinguishes teacher and student responses, and speculative distillation interleaves their tokens \citep{ko2024distillm,ko2025distillm2,xu2025speculative}. Recent work studies thinking-pattern compatibility, entropy-aware forward KL, and privileged-context self-distillation \citep{li2026rethinking,jin2026entropy,zhao2026selfdistilled}. For pruned students, ShortOPD schedules short-to-long rollouts to reduce computation spent on teacher-confirmed repetition \citep{zhang2026shortopd}. These methods modify recovery supervision or sampling. \method{} uses forward KL on teacher prefixes to select the starting structure, leaving the OPD loss unchanged.

\Needspace{24\baselineskip}
\section{Limitations}\label{sec:limitations}

\paragraph{Generalization beyond the current pruning setting.}
Our experiments focus on FFN-width pruning for two Qwen3 models followed by a fixed SFT--OPD recovery pipeline. While \method{} consistently improves recovered reasoning performance across several pruning criteria, its effectiveness may depend on the type of structure being removed and the capability being evaluated. In particular, the mixed results on instruction following and the 8B LLM-Pruner case indicate that recovery-aware calibration does not uniformly benefit every criterion or task. An important next step is to extend \method{} beyond FFN channels to other structured compression settings, such as attention-head and layer pruning, and to study whether the same recovery-aware signal transfers across model families and more interactive agent workloads.

\paragraph{Understanding what makes recovery-aware calibration effective.}
\method{} combines teacher trajectories with damage-aware weighting based on forward KL, so the current experiments evaluate the complete calibration procedure rather than isolating each design choice independently. Although the fixed-prefix analysis shows that \method{} reduces residual discrepancy on tokens that are heavily affected by pruning, the precise connection between this reduction and downstream recovery remains only partially understood. We are therefore interested in disentangling the roles of trajectory choice, divergence direction, and token weighting, and in developing diagnostics that follow the student states encountered during OPD rather than only fixed teacher prefixes. These directions may help identify when recovery-aware calibration is most useful and lead to more general principles for selecting structures that remain recoverable after pruning.

\section{Conclusion}\label{sec:conclusion}

We introduced \method{}, a recovery-aware calibration approach for structured pruning followed by on-policy distillation. By using teacher--probe disagreement to reweight calibration evidence before structure selection, \method{} improves recovered mathematical reasoning and reduces residual damage at vulnerable tokens. More broadly, our results suggest that pruning should account not only for what a model preserves immediately after compression, but also for what the resulting student can recover afterward.

\section*{AI Use Statement}\label{sec:ai_use}
Generative AI tools were used solely as assistants for coding and writing. For coding, they supported the implementation, debugging, refactoring, and documentation of author-designed training and evaluation pipelines, data-processing utilities, and visualization scripts. For writing, they were used to improve clarity, concision, phrasing, and copy-editing of content and claims developed by the authors. Generative AI was not used for research ideation, problem formulation, methodological or experimental design, theoretical development or proofs, literature search or reference selection, result interpretation, or the generation of experimental results. All AI-assisted code was reviewed and tested by the authors, and all reported results were independently verified against the underlying per-sample records. The authors reviewed all AI-assisted outputs and take full responsibility for the final manuscript, claims, code, and associated artifacts.

\section*{Ethics Statement}
This study examines compression and recovery of existing language models using public benchmarks and teacher-generated responses. Compression may reduce the resources needed to deploy capable models, including in applications with potential for misuse. The reported recovery gains do not establish safety, fairness, or reliability in deployment. In particular, instruction-following and scientific reasoning do not improve uniformly, and the compressed students retain failure modes of both pruning and their teachers. Models and data should be used under their applicable licenses, and intended applications require separate capability and safety evaluation.

\section*{Reproducibility Statement}
Our code: \url{https://github.com/jianghoucheng/ReCal}. Sections~\ref{sec:method} and~\ref{sec:setup} describe the method and evaluation comparisons. Appendix~\ref{app:protocol} documents the models, calibration adapters, training data, recovery settings, and evaluation protocol; Appendix~\ref{app:diagnostics} specifies the fixed-prefix analysis. The accompanying source bundle includes aggregate measurements and table/figure scripts. Appendix~\ref{app:theory} states the theoretical assumptions and provides the proof.

\bibliography{iclr2027_conference}
\bibliographystyle{iclr2027_conference}

\clearpage
\appendix
\makeatletter
\setlength{\@fptop}{0pt}
\setlength{\@fpsep}{12pt}
\setlength{\@fpbot}{0pt plus 1fil}
\makeatother
\raggedbottom
\section{Experimental Details}\label{app:protocol}
We describe the models and channel-scoring adapters in Sections~\ref{app:models}--\ref{app:adapters}, followed by the training data (Section~\ref{app:data}) and recovery settings (Section~\ref{app:training}). Sections~\ref{app:evaluation} and~\ref{app:diagnostics} specify benchmark evaluation and fixed-prefix analysis.

\subsection{Models and Pruned Architecture}\label{app:models}
We use Qwen3-8B with thinking enabled and Qwen3-4B-Instruct-2507 in non-thinking mode \citep{yang2025qwen3}. Each student's corresponding unpruned checkpoint serves as its frozen teacher. Structured pruning removes channels from the gated FFN. For hidden input $z$ and intermediate activation
\begin{equation}
a=\operatorname{SiLU}(W^{\mathrm{gate}}z)\odot W^{\mathrm{up}}z,\qquad o=W^{\mathrm{down}}a,
\end{equation}
removing channel $j$ removes row $j$ of $W^{\mathrm{gate}}$ and $W^{\mathrm{up}}$ and column $j$ of $W^{\mathrm{down}}$. Attention, embeddings, normalization layers, and the language-model head are retained. Every layer uses the same retained width, aligned to a multiple of 128. Table~\ref{tab:widths} gives the exact widths; whole-model parameter reduction is smaller than FFN-width reduction because only FFNs are pruned.

\begin{table}[t]
\centering\small
\caption{Retained FFN widths and whole-model parameter reduction. The original widths are 12,288 for 8B and 9,728 for 4B.}
\label{tab:widths}
\begin{tabular}{lrrrr}
\toprule
Model & Target reduction & Retained width & Actual reduction & Parameters\\
\midrule
8B & 15\% & 9,472 & 15.21\% & 6.945B\\
8B & 25\% & 7,680 & 24.89\% & 6.152B\\
8B & 35\% & 5,760 & 35.26\% & 5.303B\\
4B & 15\% & 7,552 & 14.96\% & 3.421B\\
4B & 25\% & 6,144 & 24.63\% & 3.032B\\
4B & 35\% & 4,608 & 35.19\% & 2.607B\\
\bottomrule
\end{tabular}
\end{table}

\subsection{Pruning Criteria and Calibration}\label{app:adapters}
The compared methods share the architecture in Section~\ref{app:models}. They differ in channel scores, rather than layerwise width allocation. For response $r$, let $a_{r,t,j}$ be channel $j$'s activation at token $t$ and $W^{\mathrm{down}}_{:,j}$ its output-weight column. We suppress the layer index below. For \method{}, define the weighted activation expectation
\begin{equation}\label{eq:weighted_expectation}
\mathbb E_w[f(a_j)]=\frac1R\sum_{r=1}^R\sum_{t\in\mathcal R_r}\widetilde w_{r,t}f(a_{r,t,j}),
\end{equation}
where $\mathcal R_r$ contains valid response tokens and each response has total weight one. The ordinary adapters aggregate uniformly over their prompt calibration inputs. \method{} instead uses teacher responses and the damage weights in Equation~\ref{eq:weights}.

\paragraph{Minitron.}
Following activation-based width pruning \citep{muralidharan2024compact,sreenivas2024minitron}, the implemented adapter first averages absolute activations within each calibration sequence, then aggregates their squares across sequences:
\begin{equation}
s_j=\sqrt{\sum_r\left(\frac1{|\mathcal T_r|}\sum_{t\in\mathcal T_r}|a_{r,t,j}|\right)^2}.
\end{equation}
Here $\mathcal T_r$ is the baseline sequence's token set. \method{} replaces the inner mean with $\sum_{t\in\mathcal R_r}\widetilde w_{r,t}|a_{r,t,j}|$. Squaring remains outside the within-sequence aggregation; this is not equivalent to a global mean-absolute-activation score.

\paragraph{Wanda.}
Wanda combines weight magnitude and activation statistics \citep{sun2023wanda}. Our structured channel adapter, Wanda-SP, scores
\begin{equation}
s_j=\|W^{\mathrm{down}}_{:,j}\|_2\sqrt{\mathbb E[a_j^2]}.
\end{equation}
\method{} substitutes $\mathbb E_w[a_j^2]$ for the uniform second moment. This preserves the weight factor and channel grouping. It is a structured adaptation of Wanda, whose original formulation prunes individual weights.

\paragraph{FLAP.}
The reported FLAP adapter uses the WIFN score \citep{an2023flap}:
\begin{equation}
s_j=\|W^{\mathrm{down}}_{:,j}\|_2\mathbb E[|a_j|].
\end{equation}
\method{} replaces the activation expectation with $\mathbb E_w[|a_j|]$. Although the implementation also records the WIFV statistic $\|W^{\mathrm{down}}_{:,j}\|_2^2\operatorname{Var}(a_j)$, WIFV does not select the reported masks. We do not use FLAP's adaptive layerwise sparsity or bias compensation; fixed per-layer widths and the same recovery pipeline allow comparison of channel selection across all adapters.

\paragraph{LLM-Pruner.}
The first-order LLM-Pruner adapter groups the gate, up, and down weights associated with each channel, denoted by $G_j$, and uses the first-order score \citep{ma2023llmpruner}
\begin{equation}
s_j=\sum_{u\in G_j}\left|u\frac{\partial\mathcal L}{\partial u}\right|.
\end{equation}
The baseline uses next-token cross entropy on calibration prompts. \method{} differentiates a weighted response-token loss, $\mathcal L=\sum_t w_t\operatorname{CE}_t/\sum_t w_t$, and applies the same group score. Only FFN projection parameters receive gradients during scoring. The maximum scoring sequence length is 2,048 for both variants. Weights enter the loss before differentiation and before the absolute-value aggregation.

\paragraph{Calibration data and divergence.}
Both calibration pools are balanced equally across mathematics, code, science, and instruction following. The baseline pool contains prompts, whereas \method{} uses responses from the unpruned teacher, with prompt and padding tokens excluded from weighted statistics. The probe is the corresponding baseline pruned to the same target widths, without recovery training. Final scores are recomputed on the original unpruned model; probe weights and masks are not inherited by the final student.

For efficient damage estimation, let $S_{r,t}$ be the teacher's Top-$k$ token set at a shared prefix. We retain the original probabilities on these tokens and collect the remaining mass in a single bucket:
\begin{equation}\label{eq:coarse_kl}
\widehat d_{r,t}=\sum_{v\in S_{r,t}}p_T(v)\log\frac{p_T(v)}{p_P(v)}
+p_T(\mathrm{other})\log\frac{p_T(\mathrm{other})}{p_P(\mathrm{other})},
\end{equation}
where $p_M(\mathrm{other})=1-\sum_{v\in S_{r,t}}p_M(v)$ for $M\in\{T,P\}$. We use $k=16$ and a maximum calibration length of 8,192 tokens for the activation-based adapters. This is KL on a shared coarsened vocabulary, not KL after renormalizing only the Top-$k$ tokens. The per-response weights are $\widetilde w_{r,t}=\widehat d_{r,t}/\sum_u\widehat d_{r,u}$. If the denominator is at most $10^{-9}$, we use uniform weights over valid response tokens. Each response therefore has equal total weight regardless of its length. Damage scores are aligned with the response-token activations that predict the affected next token before statistics are accumulated. These comparisons evaluate the combined use of teacher trajectories and damage weighting.

\subsection{Training Data}\label{app:data}
SFT and OPD use separate prompt pools drawn from mathematics, code, science, and instruction following, initially mixed in equal proportions (1:1:1:1). Table~\ref{tab:training_sources} lists the sources. Science prompts balance physics, chemistry, and biology in a 1:1:1 ratio; OPD instruction prompts balance one through five constraints. Mathematics prompts added from the SFT source use a separate source partition, and the final SFT and OPD pools have no normalized-prompt overlap. Prompts undergo deduplication and benchmark-overlap screening.

\begin{table}[t]
\centering\small
\setlength{\tabcolsep}{4pt}\renewcommand{\arraystretch}{1.13}
\caption{Public prompt sources. Dataset identifiers are linked to their source pages. NVIDIA datasets carry the \texttt{nvidia/} namespace; IF-multi is from \texttt{allenai/}. The initial domain mixture is 1:1:1:1 in both stages.}
\label{tab:training_sources}
\begin{tabularx}{\linewidth}{@{}l>{\raggedright\arraybackslash}p{.36\linewidth}>{\raggedright\arraybackslash}X@{}}
\toprule
Domain & SFT prompt source & OPD prompt source\\
\midrule
Math & \href{https://huggingface.co/datasets/nvidia/Nemotron-SFT-Math-v4}{Nemotron-SFT-Math-v4} & \href{https://huggingface.co/datasets/nvidia/Nemotron-RL-Math-v2}{Nemotron-RL-Math-v2}; disjoint supplement from SFT-Math-v4\\
Code & \href{https://huggingface.co/datasets/nvidia/Nemotron-SFT-Competitive-Programming-v2}{Nemotron-SFT-Competitive-Programming-v2} & \href{https://huggingface.co/datasets/nvidia/Nemotron-RL-coding-competitive_coding}{Nemotron-RL-coding-competitive\_coding}\\
Science & \href{https://huggingface.co/datasets/nvidia/Nemotron-SFT-Science-v2}{Nemotron-SFT-Science-v2} & \href{https://huggingface.co/datasets/nvidia/Nemotron-RL-Science-v1}{Nemotron-RL-Science-v1}\\
Instruction & \href{https://huggingface.co/datasets/nvidia/Nemotron-SFT-Instruction-Following-Chat-v3}{Nemotron-SFT-Instruction-Following-Chat-v3} & \href{https://huggingface.co/datasets/allenai/IF_multi_constraints_upto5}{IF\_multi\_constraints\_upto5}\\
\bottomrule
\end{tabularx}
\end{table}

Only prompts are retained from these sources. For SFT, the corresponding unpruned teacher generates new targets with temperature 0.6 and top-$p$ 0.95. Truncated or malformed responses are excluded, so the final SFT response mixture is not exactly domain-balanced. All compared methods for a given model share this same target set. Mathematics and science use a common final-answer format, code follows the LiveCodeBench Python template, and instruction-following prompts retain their original constraints. OPD uses prompts only; it does not consume source answers or external task rewards.

\subsection{Recovery Hyperparameters}\label{app:training}
Table~\ref{tab:hyperparameters} summarizes the settings that determine training and sampling. Both stages use AdamW \citep{loshchilov2019adamw}. SFT minimizes next-token cross entropy on assistant responses, including reasoning tokens when thinking is enabled. User and system tokens do not contribute to the loss. OPD then samples fresh responses from the current student and evaluates the teacher at those student-generated prefixes. All completed main runs use 200 OPD updates.

\begin{table}[t]
\centering\small\renewcommand{\arraystretch}{1.13}
\caption{Recovery settings shared by each Base/ReCal pair. SFT is offline distillation on teacher responses. OPD uses student rollouts.}
\label{tab:hyperparameters}
\begin{tabularx}{\linewidth}{@{}lYY@{}}
\toprule
Setting & SFT & OPD\\
\midrule
Optimizer & AdamW & AdamW\\
Learning rate & $10^{-5}$ & $10^{-6}$\\
Schedule & Cosine, 5\% warmup & Constant\\
Batch size & 32 sequences & 64 prompts\\
Training budget & 2 epochs & 200 updates\\
Responses per prompt & 1 teacher target & 4 student rollouts\\
Sampling temperature & 0.6 (teacher) & 1.0 (student)\\
Sampling top-$p$ & 0.95 & 1.0\\
Response limit & 20,480 & 20,480\\
Sequence / prompt limit & 21,504 sequence & 2,048 prompt\\
Distillation support & Sampled teacher tokens & Student Top-16\\
Teacher temperature & --- & 1.0\\
Repetition penalty & --- & 1.05\\
Optimization passes per update & --- & 1\\
Policy clip ratio & --- & 0.2\\
Gradient clipping & 1.0 & 1.0\\
Weight decay & 0 & 0\\
\bottomrule
\end{tabularx}
\end{table}

\paragraph{Restricted-support OPD signal.}
Equation~\ref{eq:opd} describes the full-vocabulary reference objective. The implemented signal instead uses the student's Top-16 token IDs $K_t$ at each prefix $h_t$. Student probabilities are normalized on this support:
\begin{equation}
\widetilde p_S(v\mid h_t)=\frac{p_S(v\mid h_t)}{\sum_{u\in K_t}p_S(u\mid h_t)},\qquad v\in K_t.
\end{equation}
The token-level teacher signal is
\begin{equation}\label{eq:opd_signal}
r_{t,v}=-\widetilde p_S(v\mid h_t)\bigl[\log p_S(v\mid h_t)-\log p_T(v\mid h_t)\bigr].
\end{equation}
The implementation uses this signal directly as an advantage in its clipped policy update, without GAE or an external task reward. The log probabilities retain their original vocabulary normalization. Thus, this is a restricted-support reverse-KL-style signal, not the full KL in Equation~\ref{eq:opd}, and it differs from the teacher-support-plus-residual calculation used for calibration.

\paragraph{Shared optimization conditions.}
Paired runs start from the same unpruned checkpoint, retain identical per-layer widths, and share the same SFT targets, OPD prompts, data ordering, and optimization settings. Training uses FSDP \citep{zhao2023fsdp}, BF16 forward computation, and FP32 master weights. The recorded environment uses eight NVIDIA H20 GPUs. Matching update counts does not imply identical sampled token counts or GPU time, since students can generate different response lengths.

\subsection{Benchmarks and Evaluation}\label{app:evaluation}
We evaluate every available checkpoint before recovery, after SFT, and after SFT plus OPD, using the same response budget of 20,480 tokens. AIME is sampled at temperature 0.6 and top-$p$ 0.95; the other benchmarks use greedy decoding. Metrics are defined below.

\paragraph{AIME 2024--2026.}
Each annual set contains competition mathematics problems with exact numerical answers. We average correctness over eight sampled responses per question and then average the three yearly scores. This is average sampled accuracy, not pass@8. The final boxed answer is extracted and normalized before comparison.

\paragraph{GPQA-Diamond.}
GPQA evaluates graduate-level science questions in biology, chemistry, and physics \citep{rein2023gpqa}. We use the Diamond test split and report multiple-choice accuracy, extracting the final A/B/C/D answer from the common boxed-answer format.

\paragraph{IFBench.}
IFBench evaluates compliance with verifiable instructions \citep{pyatkin2025ifbench}. We use the official test set and verifier and report instruction-level strict accuracy (IFB-S). A response can satisfy some constraints while violating others, so this metric is distinct from prompt-level strict accuracy.

\paragraph{LiveCodeBench.}
We use the code-generation task in LiveCodeBench-v5, covering problems released from October 2024 through February 2025 \citep{jain2024livecodebench}. We report pass@1 under the official input/output or function-call templates, with a 10-second execution timeout per problem.

\paragraph{Reporting conventions.}
Scores are percentages. Within each main-table model block, boldface marks the highest displayed score in each column and underlining the second-highest distinct score, across all pruned stages, excluding the unpruned reference. Ties receive the same formatting. The final Avg. column is the unweighted arithmetic mean of AIME24, AIME25, AIME26, GPQA, IFBench, and LiveCodeBench-v5. AIME averages elsewhere use only the three AIME benchmarks. The original seven paired gains were reported before rounding and can differ by 0.1 percentage point from displayed-score differences. The additional 8B LLM-Pruner gain is computed from the reported three-decimal scores. Recovery-stage averages use full-precision repository records for all eight pairs; final-performance comparisons also use all eight. The unpruned 8B AIME/GPQA reference was regraded at the common response limit; IFBench and LiveCodeBench use fresh generations. Individual-year AIME scores are available for every main-table checkpoint and both LLM-Pruner OPD pairs. The 8B LLM-Pruner + \method{} Pruned/SFT scores are included in Table~\ref{tab:additional_early}.

\subsection{Fixed-Prefix Diagnostic Protocol}\label{app:diagnostics}
We use held-out teacher responses balanced across the four domains to measure distributional damage. Both teacher and student evaluate the same prefix at every response token. Within each response, tokens are ranked by raw-pruned baseline KL and divided into five equal-count groups, Q1--Q5, from lowest to highest damage. The groups are fixed thereafter; no checkpoint selects its own favorable token subset.

Concentration curves average each response's fraction of damage carried by its highest-damage tokens. The fixed-group plots first average KL within each response/group, then across responses. For the Q5 damage share, each response's group-average KL is multiplied by its token count, divided by total response damage, and then averaged across responses. This is an average of per-response ratios, not a ratio of group means.

The diagnostic pool contains 128 responses per model, equally split across domains, with no filtering by answer correctness or damage. Recovery tracking uses 8B Wanda at the main compression ratio. Section~\ref{sec:analysis} uses only baseline checkpoints; Section~\ref{sec:ablation} adds the calibrated counterparts on the same responses and token groups. Intervals use 2,000 trajectory-bootstrap resamples; differences use paired resampling. Tokens within a response are not treated as independent samples.

\section{Supplementary Experiments}\label{app:supplementary}
We supplement the main results with the LLM-Pruner integration in Section~\ref{app:llmpruner}, additional damage analysis in Section~\ref{app:extra_damage}, and generation diagnostics in Section~\ref{app:extra_diagnostics}. Section~\ref{app:intro_design} documents the introduction diagnostics, and Section~\ref{app:additional_results} reports benchmark results not displayed in the main paper.

\subsection{LLM-Pruner Integration}\label{app:llmpruner}
Figure~\ref{fig:llmpruner} extends the comparison to both LLM-Pruner models. In 4B, post-OPD AIME, GPQA, and code improve, while IFBench declines slightly. In 8B, AIME falls from 12.4\% to 9.2\% and IFBench from 22.4\% to 19.5\%, whereas GPQA rises from 34.8\% to 44.9\% and code from 17.5\% to 28.3\%. The 8B AIME24/25/26 scores change from 18.8/7.9/10.4\% to 13.8/5.0/8.7\%, so the mathematics decline occurs in every year. Thus, the positive 4B result does not generalize uniformly to 8B. The 8B mathematics deficit is already present after SFT: 3.2\% for the calibrated student versus 7.9\% for Base. OPD narrows this deficit from 4.7 to 3.2 points but does not reverse it. As with the other adapters, the comparison changes both calibration context and weighting.
\begin{figure}[t]
\centering\includegraphics[width=\linewidth]{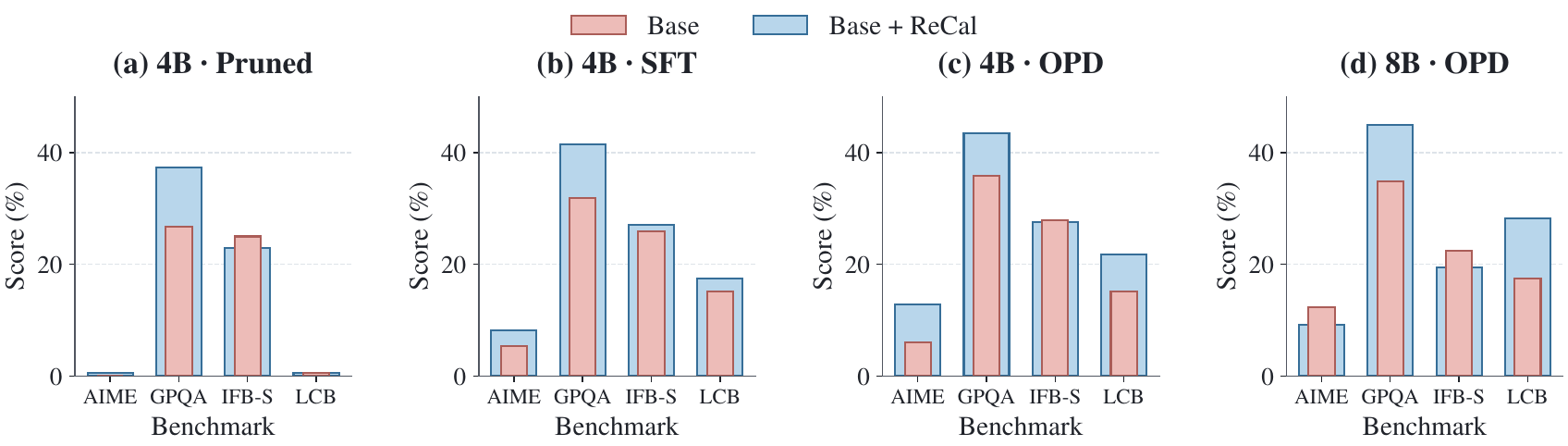}
\caption{\textbf{LLM-Pruner with and without \method{}.} (a--c) The three 4B recovery stages. (d) The completed 8B OPD pair. AIME denotes the three-year average. Wide blue bars show Base + \method{}; narrow red bars show Base.}
\label{fig:llmpruner}
\end{figure}

\subsection{Additional Damage Analysis}\label{app:extra_damage}
Figure~\ref{fig:analysis_supp} adds the 4B concentration curves and paired KL changes before and after SFT. The concentration pattern extends to 4B. In 8B Wanda, the paired changes reveal increased discrepancy in some lower-damage groups after pruning and SFT, despite the reduction in Q5. These earlier stages complement the post-OPD comparison in Figure~\ref{fig:recovery_evidence}(b).
\begin{figure}[t]
\centering\includegraphics[width=\linewidth]{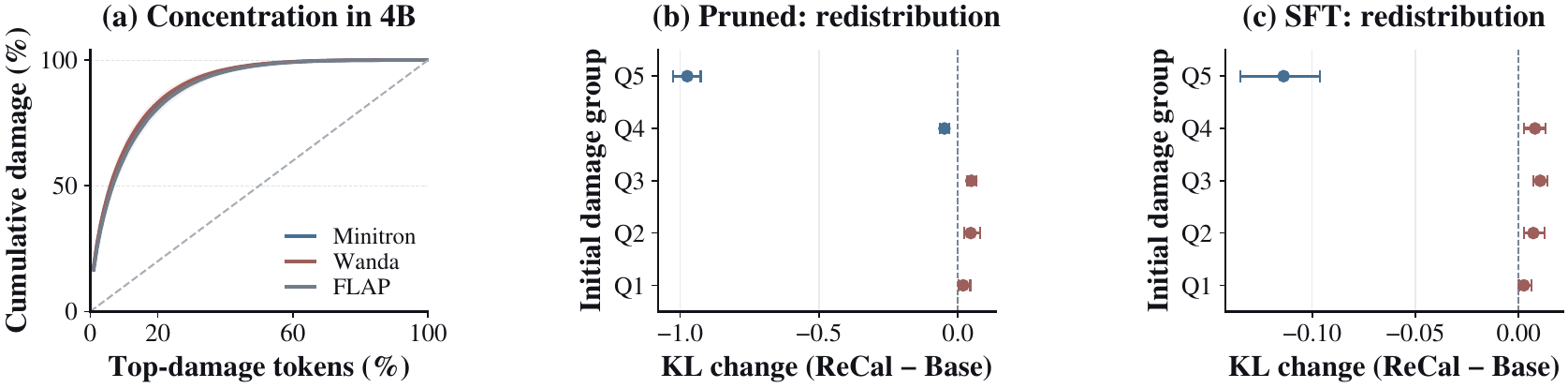}
\caption{\textbf{Additional concentration and redistribution results.} (a) Within-response damage concentration in 4B. (b,c) Paired \method{} minus Base KL after pruning and SFT. Groups remain fixed by the raw-pruned baseline. Intervals are 95\% paired trajectory-bootstrap intervals.}
\label{fig:analysis_supp}
\end{figure}

\subsection{Generation and Alignment Diagnostics}\label{app:extra_diagnostics}
Figure~\ref{fig:supp_recovery}(a) reports net completion rescue on paired AIME generations: the number where Base truncates but \method{} completes, minus the number with the reverse outcome. Pairing uses the same question and sample index, not an identical sampled trajectory. All seven post-OPD pairs with completion diagnostics are positive, ranging from 12 to 116 net completions per 720 generations. However, 4B Wanda and FLAP initially have fewer completed generations immediately after pruning. Completion after pruning is therefore an incomplete indicator of the eventual recovered model. Its aggregate post-OPD truncation rate nevertheless increases from 76.8\% to 79.4\% with \method{}, consistent with its mathematics exception.

\begin{figure}[t]
\centering\includegraphics[width=\linewidth]{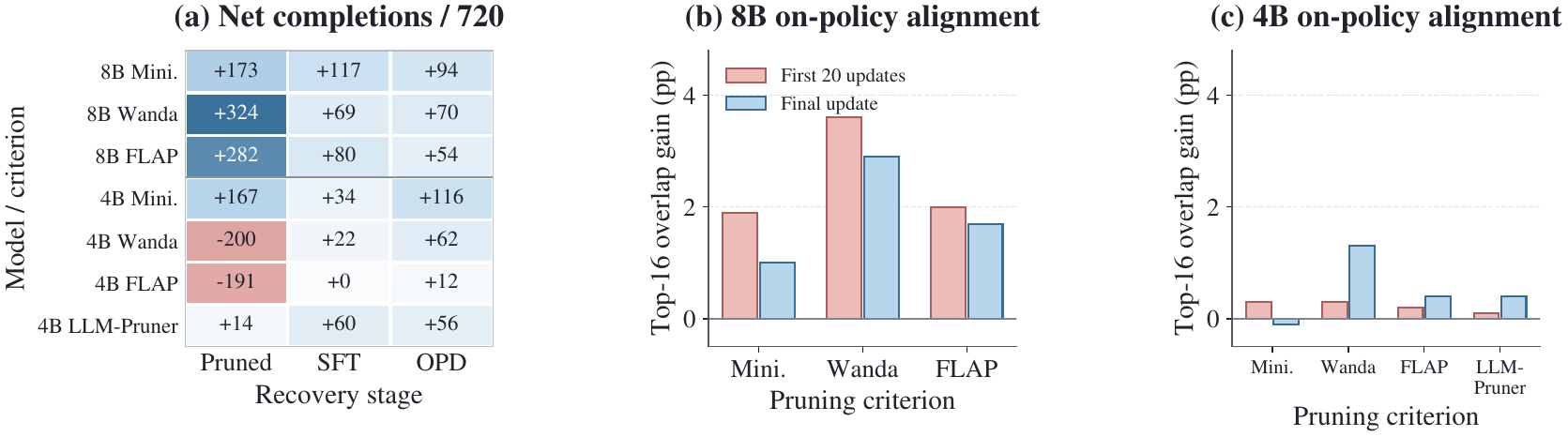}
\caption{\textbf{Supplementary recovery diagnostics.} (a) Net completion rescue across the three stages. (b,c) Top-16 teacher--student overlap gains during the first 20 OPD updates and at the final update. Each model generates its own trajectories.}
\label{fig:supp_recovery}
\end{figure}

The three 8B comparisons enter OPD with 1.9--3.6 percentage points higher teacher--student Top-16 overlap and finish 1.0--2.9 points higher. The 4B changes are smaller and not uniformly positive. Because these statistics use each student's own rollouts, they describe the trajectories encountered during recovery and do not isolate agreement at matched states.

Trajectory-level diagnostics further distinguish response length from token damage. Across probes, response length correlates strongly with summed KL (Spearman 0.783--0.886), motivating equal total weight per response. Cross-probe mean-damage rankings are highly correlated (0.963--0.975), so the necessity of a criterion-specific probe is not established. Instruction responses have high mean damage, yet IFBench does not consistently improve. A high divergence is therefore not interchangeable with task importance.

\subsection{Introduction Figure Measurement}\label{app:intro_design}
Figure~\ref{fig:intro}(b,c) uses the measured Qwen3-8B Minitron, Wanda, and FLAP baselines at the main compression ratio. Panel (b) reports mean AIME accuracy across 2024--2026 for the unpruned teacher, raw-pruned student, and post-SFT student. The shared teacher score is 61.4\%. Panel (c) reports the fraction of AIME generations reaching the common response-length limit after SFT and after OPD. Scores follow the sampling and decoding settings in Section~\ref{app:evaluation}.

Truncation measures failure to finish within the evaluation budget; it does not distinguish repetition, an unfinished derivation, or a completed answer followed by excess text. It is therefore reported separately from answer accuracy. These aggregate measurements do not estimate reverse KL conditional on failure type. The introduction's discussion of teacher confirmation at flawed prefixes draws on prior work \citep{zhang2026shortopd}; our fixed-prefix experiments in Section~\ref{sec:analysis} examine pruning damage directly.

\subsection{Additional Benchmark Results}\label{app:additional_results}
Tables~\ref{tab:additional_early} and~\ref{tab:additional_opd} report the calibrated Minitron, FLAP, and additional LLM-Pruner scores not displayed in Table~\ref{tab:main} or Figure~\ref{fig:compatibility}. The first table adds Pruned/SFT checkpoints; the second adds post-OPD science and instruction scores. Matching baseline scores are in Table~\ref{tab:main}. The 4B/8B LLM-Pruner comparisons are shown in Figure~\ref{fig:llmpruner}. Complete numerical records, including the ratio sweep, are retained in the accompanying data files.
\begin{table}[t]
\centering\small\setlength{\tabcolsep}{3pt}\renewcommand{\arraystretch}{1.10}
\caption{Additional Pruned and SFT scores at 25\% parameter reduction. These calibrated variants are absent from the main table; their post-OPD scores appear in Figures~\ref{fig:compatibility} and~\ref{fig:llmpruner}. Scores are percentages.}
\label{tab:additional_early}
\begin{tabularx}{\linewidth}{@{}ll*{6}{Y}@{}}
\toprule
Method & Stage & AIME24 & AIME25 & AIME26 & GPQA & IFB-S & LCB-v5\\
\midrule
\multicolumn{8}{c}{\textbf{Qwen3-8B}}\\
\midrule
Minitron + \method{} & Pruned & 4.6 & 0.8 & 1.7 & 30.8 & 18.6 & 5.4\\
 & SFT & 25.8 & 16.7 & 18.3 & 35.4 & 18.6 & 17.5\\
FLAP + \method{} & Pruned & 15.0 & 9.6 & 7.5 & 28.8 & 18.6 & 8.4\\
 & SFT & 41.2 & 30.8 & 32.1 & 38.4 & 20.1 & 21.1\\
LLM-Pruner + \method{} & Pruned & 0.0 & 0.0 & 0.0 & 27.3 & 18.0 & 0.0\\
 & SFT & 6.3 & 0.4 & 2.9 & 41.9 & 17.2 & 19.3\\
\midrule
\multicolumn{8}{c}{\textbf{Qwen3-4B-Instruct}}\\
\midrule
Minitron + \method{} & Pruned & 4.2 & 1.7 & 3.7 & 30.3 & 23.3 & 6.6\\
 & SFT & 22.9 & 15.0 & 15.8 & 37.9 & 25.6 & 16.3\\
FLAP + \method{} & Pruned & 14.6 & 10.0 & 8.3 & 32.8 & 25.3 & 11.4\\
 & SFT & 27.9 & 23.7 & 18.3 & 34.8 & 24.4 & 18.7\\
\bottomrule
\end{tabularx}
\end{table}
\begin{table}[t]
\centering\small\setlength{\tabcolsep}{8pt}\renewcommand{\arraystretch}{1.12}
\caption{Additional post-OPD science and instruction results at 25\% reduction, complementing the AIME and code results in Figure~\ref{fig:compatibility}. Scores are percentages.}
\label{tab:additional_opd}
\begin{tabular}{@{}llrr@{}}
\toprule
Model & Method & GPQA & IFB-S\\
\midrule
8B & Minitron + \method{} & 36.4 & 19.5\\
8B & FLAP + \method{} & 37.4 & 19.8\\
4B & Minitron + \method{} & 36.4 & 27.0\\
4B & FLAP + \method{} & 35.4 & 27.0\\
\bottomrule
\end{tabular}
\end{table}

\FloatBarrier

\section{Forward KL and Teacher-Trajectory Preservation}\label{app:theory}
We first relate full-vocabulary forward KL to preservation of teacher sequence probabilities in Section~\ref{app:retention}. Section~\ref{app:bound_scope} explains why this connection motivates the method without guaranteeing recovery gains.

\subsection{Sequence-Distribution Connection}\label{app:retention}
For a fixed prompt $x$, write $P_T$ and $P_m$ for teacher and pruned-model sequence distributions up to a shared horizon $H$, with an absorbing EOS state. Define
\begin{equation}\label{eq:trajectory_damage}
D(m)=\mathbb E_{y\sim P_T}\sum_{t=1}^H\KL\bigl(p_T(\cdot\mid x,y_{<t})\Vert p_m(\cdot\mid x,y_{<t})\bigr).
\end{equation}
\begin{proposition}[Teacher-trajectory preservation]\label{prop:retention}
Assume $P_m$ is positive wherever $P_T$ is positive. For any set $A$ of completed responses,
\begin{equation}\label{eq:retention}
\KL(P_T\Vert P_m)=D(m),\qquad P_m(A)\geq P_T(A)-\sqrt{D(m)/2}.
\end{equation}
\end{proposition}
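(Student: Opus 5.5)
The proof has two parts: the chain rule for KL divergence gives the identity, and Pinsker's inequality gives the lower bound.

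\textbf{Identity.} Because both sequence distributions are built autoregressively over a shared horizon $H$ with an absorbing EOS state, each factorizes as
\[
P_M(y\mid x)=\prod_{t=1}^H p_M(y_t\mid x,y_{<t}),\qquad M\in\{T,m\}.
\]
After EOS, both models place probability one on EOS. The per-step KL is therefore zero at those positions, and the terms beyond the end of a response contribute nothing to $D(m)$. The absolute-continuity assumption makes $\log P_T/P_m$ well defined and finite on the support of $P_T$. We expand
\[
\KL(P_T\Vert P_m)=\mathbb E_{y\sim P_T}\sum_{t=1}^H\log\frac{p_T(y_t\mid x,y_{<t})}{p_m(y_t\mid x,y_{<t})}.
\]
For each $t$ we use the tower property, conditioning on $y_{<t}$. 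The inner expectation over $y_t\sim p_T(\cdot\mid x,y_{<t})$ is exactly the per-step forward KL. Summing over $t$ gives $D(m)$. The sum is finite because $H$ is finite, so linearity of expectation needs no interchange-of-limits justification.

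\textbf{Bound.} For any event $A$ we have $P_T(A)-P_m(A)\le \TV(P_T,P_m)$. Pinsker's inequality gives $\TV(P_T,P_m)\le\sqrt{\KL(P_T\Vert P_m)/2}$. Substituting the identity from the first step yields $P_m(A)\ge P_T(A)-\sqrt{D(m)/2}$. The set $A$ lives in the finite space of completed responses of length at most $H$, so every object involved is a discrete distribution.

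\textbf{Main obstacle.} The difficulty is the bookkeeping for the absorbing EOS state, not the analysis. Completed responses of different lengths have to be identified with length-$H$ sequences padded with EOS, so that $P_T$ and $P_m$ live on the same space and $A$ is a measurable set there. We also need the per-step conditionals to be well defined on prefixes of positive teacher probability. Our plan is to state the padding convention explicitly. We then note that conditioning only occurs on prefixes $y_{<t}$ with $P_T(y_{<t})>0$; on these, positivity of $P_m$ implies positivity of every required conditional $p_m(y_t\mid x,y_{<t})$, so each term is finite. If $D(m)=\infty$, the bound is vacuous, so the proof can assume $D(m)<\infty$ throughout.
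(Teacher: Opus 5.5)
Your proposal is correct and follows the paper's proof. In both, the autoregressive factorization plus the KL chain rule (your tower-property step, with post-EOS terms vanishing) gives $\KL(P_T\Vert P_m)=D(m)$, and Pinsker's inequality (natural logarithms) then gives $P_m(A)\geq P_T(A)-\sqrt{D(m)/2}$; your explicit EOS-padding convention and the check that conditionals are positive on teacher-supported prefixes just spell out what the paper states in one line.
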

\begin{proof}
Both distributions use the same vocabulary and stopping convention. The autoregressive factorization and the KL chain rule \citep{cover2006information} give
\begin{align}
\KL(P_T\Vert P_m)
&=\mathbb E_{y\sim P_T}\log\frac{\prod_{t=1}^H p_T(y_t\mid x,y_{<t})}{\prod_{t=1}^H p_m(y_t\mid x,y_{<t})}\\
&=\sum_{t=1}^H\mathbb E_{y_{<t}\sim P_T}\KL\bigl(p_T(\cdot\mid x,y_{<t})\Vert p_m(\cdot\mid x,y_{<t})\bigr)=D(m).
\end{align}
Terms after EOS are zero. With natural logarithms, Pinsker's inequality yields
\begin{equation}
|P_T(A)-P_m(A)|\leq\TV(P_T,P_m)\leq\sqrt{\KL(P_T\Vert P_m)/2},
\end{equation}
which proves the claim. The same argument applies to a shared distribution over prompts by considering the joint prompt--response distribution. This proposition applies standard information-theoretic identities to the pruning setting; it is not a new KL inequality.
\end{proof}

\subsection{Interpretation and Scope}\label{app:bound_scope}
The connection explains why teacher-generated prefixes and teacher-weighted disagreement are relevant to retaining teacher-supported continuations. It does not show that KL weighting directly minimizes $D(m)$, that smaller bounds order actual task accuracies, or that SFT and OPD preserve any ordering of masks. The teacher may also generate incorrect answers.

The population quantity $D(m)$ sums token divergences before averaging over responses. The fixed-group means and length-normalized response averages in our diagnostics are different statistics. Moreover, merging tail tokens into the residual bucket in Equation~\ref{eq:coarse_kl} can only decrease KL by the data-processing inequality. The implemented coarsened KL is thus not an upper bound on full-vocabulary damage and cannot be substituted into Proposition~\ref{prop:retention} to certify sequence preservation. Recovery benefits are established empirically through the paired experiments.

\section{Additional Related-Work Comparisons}\label{app:extended_related}
We extend Section~\ref{sec:related} by distinguishing the structural decisions, calibration signals, and recovery interventions in prior work. These distinctions determine which methods are direct comparisons and which address complementary problems.

\paragraph{Pruning granularity and selection criteria.}
SparseGPT removes individual weights using an approximate second-order reconstruction procedure, while Wanda combines weight magnitude with input activations \citep{frantar2023sparsegpt,sun2023wanda}. Their original weight-level formulations differ from removing complete FFN channels; our Wanda-SP adaptation is specified in Appendix~\ref{app:adapters}. Among structured methods, the LLM Surgeon uses curvature information and weight updates, OSSCAR formulates layer reconstruction as combinatorial optimization, and SlimGPT applies batched greedy pruning within an Optimal Brain Surgeon framework \citep{ouden2024surgeon,meng2024osscar,ling2024slimgpt}. Bonsai estimates importance through forward-pass perturbations, avoiding backward passes \citep{kolawole2024bonsai}. These methods primarily differ in how they estimate or optimize the cost of deleting a structure. \method{} changes the token evidence supplied to a criterion. Compatibility with the four evaluated adapters does not establish compatibility with every reconstruction or curvature-based method.

Layer and block methods offer a different compression axis. ShortGPT ranks layer redundancy, LaCo collapses layers, BlockPruner distinguishes attention and feed-forward blocks, and SLEB eliminates redundant transformer blocks \citep{men2024shortgpt,ma2024laco,yang2024blockpruner,song2024sleb}. SliceGPT rotates representations before removing rows and columns \citep{ashkboos2024slicegpt}. ZipLM incorporates inference cost into structural decisions \citep{kurtic2023ziplm}. Our experiments retain all layers and attention modules and compare FFN channels at identical widths. Consequently, they do not establish the best compression axis or measured deployment speedups.

\paragraph{Accounting for recovery during compression.}
Sheared LLaMA combines targeted structured pruning with continued pretraining; Minitron studies pruning and distillation across multiple architectural dimensions \citep{xia2023sheared,muralidharan2024compact,sreenivas2024minitron}. More directly, DarwinLM incorporates brief candidate training into evolutionary architecture search \citep{tang2026darwinlm}. NIRVANA uses function-space saliency to account for fine-tuning dynamics and KL-based calibration-data selection \citep{ai2025nirvana}. These works already establish that compression should consider subsequent training. Our contribution is the specific teacher--probe token signal and its integration into calibration for an unchanged SFT--OPD pipeline, rather than the general idea of recovery-aware pruning. Unlike candidate training or layer-budget search, \method{} fixes the target widths and uses an unrecovered probe to redistribute calibration weight within responses.

\paragraph{Preserving reasoning.}
Reasoning models can exploit long generated traces and test-time computation \citep{deepseek2025r1,yang2025qwen3}, making next-token fit alone an incomplete evaluation of compression. RESP uses self-generated reasoning traces and decode-only gradient scoring, SPRINT considers diverse best-of-$N$ reasoning, and recent evaluations examine pruning under test-time scaling \citep{wang2025think,nguyen2025sprint,monjur2026revisiting}. RED uses activation-aware initialization for reasoning-model distillation \citep{he2026red}. MuCRASP extends reasoning-aware pruning to multimodal models using reasoning and modality signals \citep{dutta2026mucrasp}. Our fixed-prefix analysis instead tracks baseline-defined damaged tokens through recovery. It supports an empirical calibration strategy, without showing that large token divergence always identifies task-critical reasoning steps. Our mathematics and code gains, instruction-following losses, and the 8B LLM-Pruner exception bound that interpretation.

\paragraph{Distillation objectives and rollout distributions.}
Classical distillation transfers teacher predictions, while sequence-level distillation fits generated teacher responses \citep{hinton2015distilling,kim2016sequence}. Learning on learner-induced states addresses a general distribution-shift problem in sequential prediction \citep{ross2011dagger}; language-model OPD obtains teacher supervision on these states \citep{agarwal2023opd,gu2023minillm}. Speculative distillation interleaves student proposals and teacher replacements to balance on-policy coverage with response quality \citep{xu2025speculative}. Self-Distilled Reasoner obtains supervision from a solution-conditioned version of the same model \citep{zhao2026selfdistilled}. These supervision designs differ from our frozen unpruned teacher. Entropy-aware OPD also employs forward KL, but uses teacher uncertainty to adapt the recovery objective \citep{jin2026entropy}. \method{} uses teacher--probe disagreement before recovery to weight structural calibration. Thus, the divergence direction alone is not the novelty; its role in pre-recovery structure selection is the relevant distinction.

\end{document}